\newtheorem{theorem}{Theorem}
\newtheorem{lemma}{Lemma}
\newtheorem{proposition}{Proposition}
\newcommand{\allfor}{\displaystyle\mathop{\mathlarger{\forall}}}
\newcommand{\xeist}{\displaystyle\mathop{\mathlarger{\exists}}}
\icmltitlerunning{A Computationally Efficient Neural Network Invariant to the Action of Symmetry Subgroups}
\begin{document}

\twocolumn[
\icmltitle{A Computationally Efficient Neural Network Invariant to the Action \\ of Symmetry Subgroups}



\icmlsetsymbol{equal}{*}

\begin{icmlauthorlist}
\icmlauthor{Piotr Kicki}{irmi}
\icmlauthor{Mete Ozay}{}
\icmlauthor{Piotr Skrzypczyński}{irmi}
\end{icmlauthorlist}

\icmlaffiliation{irmi}{Institute of Robotics and Machine Intelligence, Poznan University of Technology, Poznan, Poland}

\icmlcorrespondingauthor{Piotr Kicki}{piotr.z.kicki@doctorate.put.poznan.pl}

\icmlkeywords{Machine Learning, deep neural networks, group invariance, G-invariance, geometric deep learning, ICML}

\vskip 0.3in
]



\printAffiliationsAndNotice{} 
\begin{abstract}
We introduce a method to design a computationally efficient $G$-invariant
neural network that approximates functions invariant to the action of a given
permutation subgroup $G \leq S_n$ of the symmetric group on input data.
The key element of the proposed network architecture is a new $G$-invariant transformation module, which produces a $G$-invariant latent representation of the input data.
This latent representation is then processed with a multi-layer perceptron in the network.
We prove the universality of the proposed architecture, discuss its properties and highlight its computational and memory efficiency.
Theoretical considerations are supported by numerical experiments involving different network configurations,
which demonstrate the effectiveness and strong generalization properties of the proposed method in comparison to other $G$-invariant neural networks.

\end{abstract}

\section{Introduction}

The design of probabilistic models which reflect symmetries existing in data is considered an important task following the notable success of deep neural networks, such as convolutional neural networks (CNNs) \citep{cnn2} and PointNet \citep{pointnet}.
Using prior knowledge about the data and expected properties of the model, such as permutation invariance \citep{pointnet}, one can propose models that achieve superior performance. Similarly, translation equivariance can be exploited for CNNs \citep{G_equiv_cnns} to reduce their number of weights.


Nevertheless, researchers have been working on developing a general approach which enables to design architectures that are invariant and equivariant to the action of particular groups $G$. Invariance and equivariance of learning models to actions of various groups $G$ are discussed in the literature \citep{deep_sets, cohen2, parameter_sharing}. 
However, in this paper, we only consider invariance to permutation groups $G$, which are the subgroups\footnote{A subset $G \subset S_n$ is a subgroup of $S_n$ if and only if it satisfies group properties. Please see the appendix \ref{sec:math} for the formal definitions.} of the symmetric group $S_n$ of all permutations on a finite set of $n$ elements, as it covers many interesting applications.

\begin{figure}[t]
\begin{center}
\centerline{\includegraphics[width=\columnwidth]{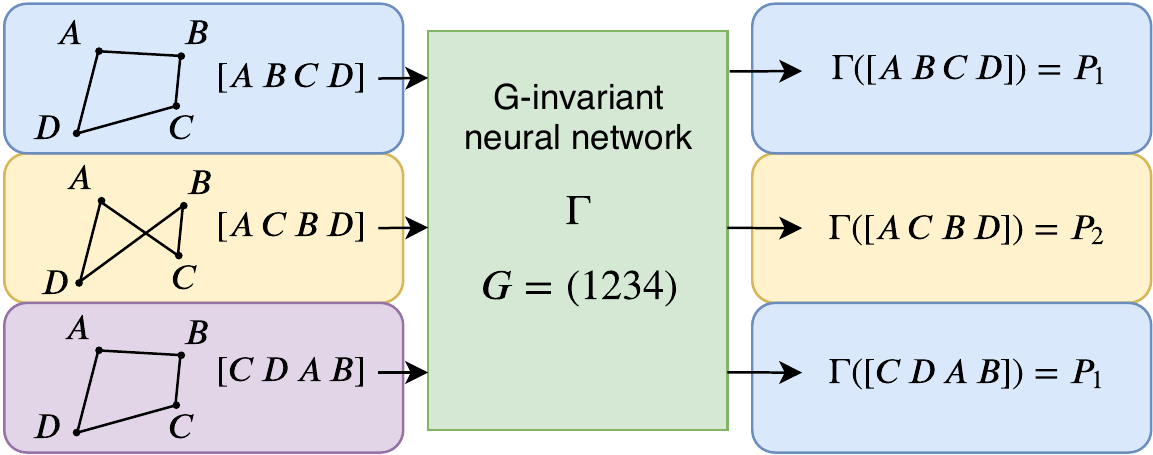}}
\caption{An illustration of employment of the proposed {$G$-invariant} neural network $\Gamma$ for estimation of area of quadrangles. No matter which vertex of a quadrangle $[A\,B\,C\,D]$ is given first, if the consecutive vertices are provided in the same order (e.g. for the quadrangle $[C\,D\,B\,A]$), then the network $\Gamma$ computes the same area $P_1$. However, the network $\Gamma$ is not invariant to all permutations. For example, the shape of $[A\,C\,B\,D]$ is hourglass-like, and the order of its vertices is different from that of the other quadrangles. Therefore, the network $\Gamma$ estimates a different area $P_2$ (please see the examples in the yellow boxes).}
\label{fig:intro}
\end{center}
\vskip -0.4in
\end{figure}

An example of the employment of the proposed $G$-invariant network for a set of quadrangles is illustrated in Figure~\ref{fig:intro}. The network $\Gamma$ receives a matrix representation of the quadrangles (i.e. a vector of 4 points on a plane) and outputs the areas covered by those quadrangles. One can spot that, no matter which point will be given first, if the consecutive vertexes are provided in the right order, the area of the figure will remain the same. Such property can be described as $G$-invariance, where $G = (1234)$\footnote{$G=(1234)$ denotes a group $G$ generated by the permutation $(1234)$, in which the first element is replaced by the second, second by the third and so on, till last element being replaced by first.}.

Recently, \citet{maron19} proposed a $G$-invariant neural network architecture for some finite subgroups $G \leq S_n$ and proved its universality. Unfortunately, their proposed solution is intractable for larger inputs and groups, because of the rapidly growing size of tensors and the number of operations needed for forward and backward passes in the network.

The aim of this paper is to propose a method that enables us to design a novel $G$-invariant architecture for a given finite group $G \leq S_n$, which is universal, able to generalize well and tractable even for big groups. The paper is organized as follows:
\vspace{-0.1in}
\begin{enumerate}[leftmargin=*]
\setlength\itemsep{0.0em}
    \item Related work is given in Section~\ref{rw}.
    \item In Section~\ref{arc}, we introduce our  $G$-invariant network architecture, which consists of (i) a $G$-invariant transformation block composed of a $G$-equivariant network and a Sum-Product Layer (SPL) denoted by $\Sigma\Pi$ which employs a superposition of product units \citep{product_units}, and (ii) a fully connected neural network.
    \item In Section~\ref{sec:GHS}, we elucidate the invariance of the proposed network to the actions of hierarchical subgroups. For this purpose, we describe the cases when the proposed {$G$-invariant} network can be also $H$-invariant for ${G < H \leq S_n}$.
    \item In Section~\ref{sec:universality}, we prove that any continuous $G$-invariant function $f:V \rightarrow \mathbb{R}$, where $V$ is a compact subset of $\mathbb{R}^{n \times n_{in}}$, for some $n, n_{in} > 0$, can be approximated using the proposed $G$-invariant network architecture.
    \item  In Section~\ref{sec:efficiency}, we discuss in detail the computational efficiency of the proposed method and relate that to the state-of-the-art $G$-invariant architecture proposed by \citet{maron19}.
    \item In Section~\ref{sec:experiments}, we provide experimental analyses and numerical evaluation of the proposed method and state-of-the-art $G$-invariant neural networks on two benchmark tasks: (i) $G$-invariant polynomial approximation and (ii) convex quadrangle area estimation. Moreover, we examine experimentally scalability, robustness and computational efficiency of the models learned using the proposed $G$-invariant networks.
    \item In Section~\ref{conc}, we summarize the paper and provide a detailed discussion.
\end{enumerate}

\section{Related Work}
\label{rw}

In order to make use of symmetry properties of data while learning deep feature representations, various $G$-invariant or $G$-equivariant neural networks have been proposed in the last decade. In various tasks, learned network models should reveal the invariance or equivariance to the whole group $S_n$ of all permutations on a finite set of  $n$-elements. \citet{pointnet} applied a permutation invariant network for point cloud processing, whereas \citet{deep_sets} applied both invariant and equivariant networks on sets. A permutation equivariant model was used by \citet{set_interact} to model interactions between two or more sets. For this purpose, they proposed a method to achieve permutation equivariance by parameter sharing. \cite{set_transformer} proposed an approach to achieve invariance to all permutations of input data utilizing an attention mechanism.
Another popular use case of $S_n$-invariance and equivariance properties are neural networks working on graphs, which were discussed in \citet{G_inv_graphs_1} and \citet{G_inv_graphs_2}.
Although the aforementioned papers present interesting approaches to obtain invariance to all permutations, the approach proposed in our paper allows to induce more general invariance to any subgroup of the symmetric group $S_n$.

$G$-equivariant neural networks, where $G$ is not a subgroup of $S_n$, are considered by \citet{cohen1} and \citet{cohen2}. $G$-equivariant Convolutional Neural Networks on homogeneous spaces were discussed by \citet{cohen2}, whereas \citet{cohen1} considered modeling invariants to actions of the groups on images, such as to image reflection and rotation.

Recent works have studied invariants to some specific finite subgroups $G$ of the symmetric group $S_n$, which is also considered in this paper. An approach exploiting the parameter sharing for achieving the invariant and equivariant models to such group actions was introduced by \citet{parameter_sharing}. \citet{maron19} used a linear layer model to compute a $G$-invariant and equivariant universal approximation function. However, their proposed solution requires the use of high dimensional tensors, which can be intractable for larger inputs and groups. In turn, \citet{yarotsky} considered provably universal architectures that are based on polynomial layers, but he assumed that the generating set of $G$-invariant polynomials is given, which is rather impractical. Moreover, there is also a simple approach to achieve $G$-invariance of any function, which exploits averaging of outputs of functions over a whole group $G$ \citep{reynolds}, but it linearly increases the overall number of computations with the size of the group.
 
The approach proposed in this paper builds on the work of \citet{yarotsky} and provides a network architecture to perform end-to-end tasks requiring $G$-invariance using a tractable number of parameters and operations, utilizing product units \citep{product_units} with Reynolds operator \citep{reynolds}.
While our approach is not dedicated to image processing and computer vision tasks, it can be used to construct $G$-invariant networks for different types of structured data that do not necessarily have temporal or sequential ordering (e.g. geometric shapes and graphs). This makes the proposed architecture useful in geometric deep learning \cite{bronstein}, and in a wide area of applications, from robotics to molecular biology and chemistry, where it can be used e.g. for estimating the potential energy surfaces of the molecule \citep{chemia1, chemia2}.

\section{$G$-invariant Network}

In this section, we introduce a novel $G$-invariant neural network architecture, which exploits the theory of invariant polynomials and the universality of neural networks to achieve a flexible scheme for $G$-invariant transformation of data for some known and finite group $G \leq S_{n}$, where $S_n$ is a symmetric group and $|G| = m$. Next, we discuss invariance of networks to actions of groups with a hierarchical structure, such as invariance to actions of groups $H$, where $G < H \leq S_n$. Then, we prove the universality of the proposed method and finally analyze its computational and memory complexity.

\subsection{$G$-invariant Network Architecture}
\label{arc}
We assume that an input $x \in \mathbb{R}^{n \times n_{in}}$ to the proposed network is a tensor\footnote{We use matrix notation to denote tensors in this paper.} $x = [x_1\, x_2\,  \dots\, x_n]^T$ of $n$ vectors $x_i \in \mathbb{R}^{n_{in}}, i =1,2,\ldots, n$. 
The $G$-invariance property of a function $f: \mathbb{R}^{n \times n_{in}} \rightarrow \mathbb{R}$  means that $f$ satisfies
\begin{equation}
\label{eq:Ginv}
  \allfor_{x \in \mathbb{R}^{n \times n_{in}}}\allfor_{g \in G}\,f(g(x)) = f(x), 
\end{equation}
where\footnote{$\allfor_{y \in Y} P(Y)$ means that ``predicate $P(Y)$ is true for all $y \in Y$''.} the action of the group element $g$ on $x$ is defined by
\begin{equation}
\label{eq:Gequiv}
    g(x) = \{x_{\sigma_g(1)}, x_{\sigma_g(2)}, \dots, x_{\sigma_g(n)}\},
\end{equation}
where $x_{\sigma_g(i)} \in \mathbb{R}^{n_{in}}$ and $\sigma_g(i)$ represents the action of the group element $g$ on the specific index $i$.
Similarly, a function $f: \mathbb{R}^{n \times p} \rightarrow \mathbb{R}^{n \times q}$ has a $G$-equivariance property, if the function $f$ satisfies
\begin{equation}
   \allfor_{g \in G} \allfor_{x \in \mathbb{R}^{n \times p}} \,g(f(x)) = f(g(x)).
\end{equation}

\begin{figure}[ht]
\vskip -0.1in
\begin{center}
\centerline{\includegraphics[width=\columnwidth]{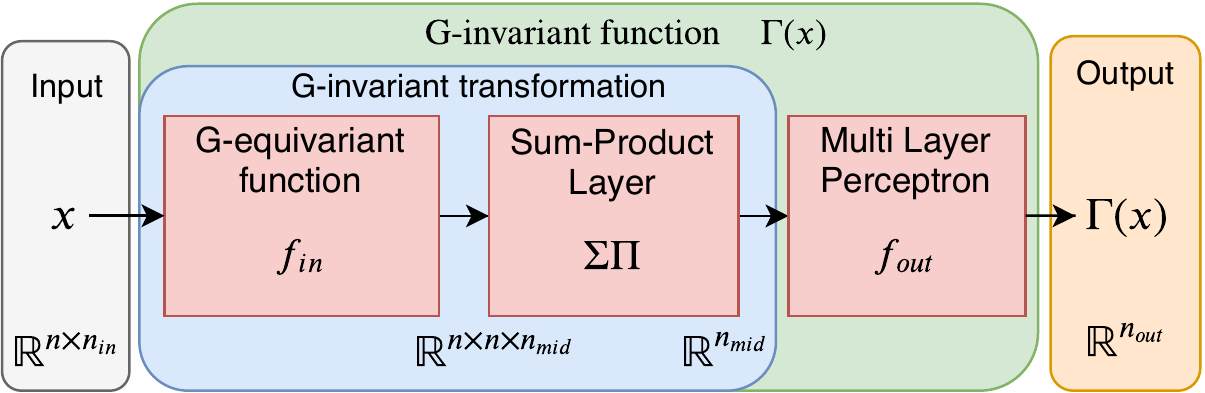}}
\caption{An illustration of the proposed $G$-invariant neural network. Input $x$ is processed by the $G$-invariant transformation (blue), which produces $G$-invariant representation of the input. Then, the $G$-invariant representation is passed to the Multi Layer Perceptron which produces the output vector $\Gamma(x)$.}
\label{fig:ginv}
\end{center}
\vskip -0.2in
\end{figure}
The proposed $G$-invariant neural network is illustrated in Figure \ref{fig:ginv} and defined as function $\Gamma: \mathbb{R}^{n \times n_{in}} \rightarrow \mathbb{R}^{n_{out}}$ of the following form
\begin{equation}
\label{eq:Gamma}
   \Gamma(x) = f_{out}(\Sigma\Pi(f_{in}(x))),
\end{equation}
where $f_{in}$ is a $G$-equivariant input transformation function, $\Sigma\Pi$ is a function which, when combined with $f_{in}$, comprises $G$-invariant transformation and $f_{out}$ is an output transformation function.
The general idea of the proposed architecture is to define a $G$-invariant transformation, which uses the sum of $G$-invariant polynomials ($\Sigma\Pi$) of $n$ variables, which are the outputs of $f_{in}$. This transformation produces a $G$-invariant feature vector, which is processed by another function $f_{out}$ that is approximated by the Multi-Layer Perceptron.

First, let us define the $G$-equivariant input transformation function $f_{in}: \mathbb{R}^{n \times n_{in}} \rightarrow \mathbb{R}^{n \times n \times n_{mid}}$, where $n_{mid}$ is the size of the feature vector. This function can be represented as a vector $\Phi = [\phi_1\, \phi_2\, \dots\, \phi_n]$ of neural networks, where each function $\phi_i: \mathbb{R}^{n_{in}} \rightarrow \mathbb{R}^{n_{mid}}$ is applied on all elements of the set of input vectors $\{x_i\}_{i=1}^n$, and transforms them to the $n_{mid}$ dimensional vector.
As a result, the operation of the $f_{in}$ function can be formulated by
\begin{equation}
\label{eq:fin}
    f_{in}(x) = 
    \begin{bmatrix}
        \Phi(x_1)\\
        \Phi(x_2)\\
        \vdots\\
        \Phi(x_n)\\
    \end{bmatrix} = 
    \begin{bmatrix}
        \phi_1(x_1) & \ldots & \phi_n(x_1)\\
        \vdots & \ddots & \vdots \\
        \phi_1(x_n) & \ldots & \phi_n(x_n)\\
    \end{bmatrix}.
\end{equation}
One can see that $f_{in}(x)$ is $G$-equivariant, since the action of the vector $\Phi$ of functions is the same for each element of the vector $x$, thus it transposes the rows of the matrix form \eqref{eq:fin} according to $g \in G$, which is equivalent to transposing the rows after the calculation of $f_{in}(x)$ by
\begin{equation}
{\small 
   f_{in}(g(x)) = 
    \begin{bmatrix}
        \Phi(x_{\sigma_g(1)})\\
        \Phi(x_{\sigma_g(2)})\\
        \vdots\\
        \Phi(x_{\sigma_g(n)})\\
    \end{bmatrix} = 
     g\left(\begin{bmatrix}
        \Phi(x_1)\\
        \Phi(x_2)\\
        \vdots\\
        \Phi(x_n)\\
    \end{bmatrix}\right) = g(f_{in}(x)).
}
\end{equation}
Second, we define the function $\Sigma\Pi: \mathbb{R}^{n \times n \times n_{mid}} \rightarrow \mathbb{R}^{n_{mid}}$, which constructs $G$-invariant polynomials of outputs obtained from $f_{in}$, by
\vspace{-0.1in}
\begin{equation}
\label{eq:sigmapi}
    \Sigma\Pi(x) = \sum_{g \in G}\prod_{j=1}^{n} x_{\sigma_g(j), j}.
\end{equation}
To see the $G$-invariance of $\Sigma\Pi(f_{in}(x))$, we substitute $x$ from \eqref{eq:sigmapi} with \eqref{eq:fin} to obtain
\vspace{-0.1in}
\begin{equation}
\label{eq:sigmapifin_G}
    \Sigma\Pi(f_{in}(x)) = \sum_{g \in G}\prod_{j=1}^{n}  \phi_j(x_{\sigma_g(j)}).
\end{equation}
Then, we can show that \eqref{eq:sigmapifin_G} is $G$-invariant by checking whether \eqref{eq:Ginv} holds for any input $x$ and any group element $g' \in G$ as follows:
\vspace{-0.05in}
\small
\begin{equation}
\begin{split}
    \Sigma\Pi(g'(f_{in}(x))) &= \sum_{g \in G}\prod_{j=1}^{n} \phi_j(x_{\sigma_{g'}(\sigma_g(j))})\\ 
    &= \sum_{g \in G}\prod_{j=1}^{n}  \phi_j(x_{\sigma_g(j)}) = \Sigma\Pi(f_{in}(x))
\end{split},
\end{equation}
\normalsize
since any group element acting on the group leads to the group itself.
Last, we define the output function $f_{out}: \mathbb{R}^{n_{mid}} \rightarrow \mathbb{R}^{n_{out}}$ following the structure of a typical fully connected neural network by
\vspace{-0.05in}
\begin{equation}
\label{eq:fout}
  f_{out}(x) = \sum_{i=1}^{N}c_i \sigma\left(\sum_{j=1}^{n_{mid}} w_{ij} x_j + h_i\right),
\end{equation}
where $N \in \mathbb{N}_{+}$ is a parameter, $\sigma$ is a non-polynomial activation function and $c_i, w_{ij}, h_i \in \mathbb{R}$ are coefficients.

\subsection{Invariance to Actions of Hierarchical Subgroups}
\label{sec:GHS}
Note that, it is possible to obtain a function of the form $\Gamma$ that is not only $G$-invariant, but also $H$-invariant, for some $G < H \leq S_n$. Such a case is in general contradictory to the intention of the network user, because it imposes more constraints than imposed by the designer of the network. 
To illustrate such a case, assume that
\begin{equation}
    \phi_i(x) = \phi(x)\quad \text{for} \quad {i \in \{1, 2, \dots, n\}}, 
\end{equation}
for an arbitrary function $\phi: \mathbb{R}^{n_{in}} \rightarrow \mathbb{R}^{n_{mid}}$.
Then, the action of the function $\Sigma\Pi(f_{in}(x))$ will be defined by
\begin{equation}
    \Sigma\Pi(f_{in}(x)) = \sum_{g \in G}\prod_{j=1}^{n}  \phi(x_{\sigma_g(j)}) = m \prod_{j=1}^{n}  \phi(x_j),
\end{equation}
which is both $G$-invariant and $S_n$-invariant.
So, it is clear that there exists some identifications of the form 
\begin{equation}
\allfor_{E \subset \mathcal{P}(\{0, 1, \dots, n\})}\allfor_{e \in E} 
\phi_e(x) = \phi_E(x),
\end{equation}
where $\mathcal{P}(X)$ denotes the power set of the set $X$, and $\phi_E: \mathbb{R}^{n_{in}} \rightarrow \mathbb{R}^{n_{mid}}$ is a function, which leads to the {$H$-invariance} for some $G < H \leq S_n$.

However, we conjecture that, if the function $f_{in}$ is realized by a randomly initialized neural network, then such identifications are almost impossible to occur and the function $\Gamma$ will be $G$-invariant only. 
But, if we consider a case when the data can reveal $H$-invariant models, then the proposed solution enables network models to learn identifications needed to achieve also $H$-invariance. This property is desirable since it at the same time retains the $G$-invariance and allows for stronger invariants if learned from data. 

\subsection{The Universality of the Proposed $G$-Invariant Network}
\label{sec:universality}
\begin{proposition}
\label{thm:prop}
The network function (\ref{eq:Gamma}), can approximate any $G$-invariant function $f: V \rightarrow \mathbb{R}$, where $V$ is a compact subset of $\mathbb{R}^{n \times n_{in}}$ and $G \leq S_n$ is a finite group, as long as number of features $n_{mid}$ at the output of input transformation network $f_{in}$ is greater than or equal to the size $N_{inv}$ of the generating set $\mathcal{F}$ of polynomial $G$-invariants.
\end{proposition}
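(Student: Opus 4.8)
The plan is to decompose the approximation into three classical ingredients --- finite generation of the ring of polynomial invariants, the reduction of a continuous invariant to a function of a generating set, and the universal approximation theorem for the output network --- and then to verify that the particular $\Sigma\Pi$ layer is expressive enough to realize a generating set. Throughout I would assume without loss of generality that $V$ is $G$-invariant; otherwise I replace $V$ by the compact $G$-orbit $\bigcup_{g\in G} g(V)$, on which $f$ is still continuous and $G$-invariant.

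First I would invoke Hilbert's finiteness theorem together with Noether's theorem (in characteristic $0$): the ring $\mathbb{R}[x]^G$ of $G$-invariant polynomials is generated by a finite set $\mathcal{F} = \{p_1, \dots, p_{N_{inv}}\}$, which may be chosen to consist of Reynolds averages $R(x^\alpha) = \tfrac{1}{|G|}\sum_{g\in G}(g\cdot x^\alpha)$ of monomials $x^\alpha$ of degree $|\alpha| \le |G|$. The key observation --- and the crux of the argument --- is that each such generator already has the \emph{separable} form produced by $\Sigma\Pi$. Writing a monomial as $m(x) = \prod_{j=1}^n \varphi_j(x_j)$ with $\varphi_j(x_j) = \prod_{l=1}^{n_{in}} x_{j,l}^{\alpha_{j,l}}$, one has $m(g(x)) = \prod_{j=1}^n \varphi_j(x_{\sigma_g(j)})$, so that $\sum_{g\in G}\prod_{j=1}^n \varphi_j(x_{\sigma_g(j)}) = |G|\,R(m)(x)$, which is exactly the quantity computed by one output channel of $\Sigma\Pi(f_{in}(\cdot))$ in (\ref{eq:sigmapifin_G}). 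Assigning one channel $k$ to each generator $p_k$, and letting the $k$-th output component of the equivariant network $\phi_j$ approximate the position-$j$ factor $\varphi_j^{(k)}$, I obtain that $\Sigma\Pi(f_{in}(x))$ reproduces the vector $(p_1(x), \dots, p_{N_{inv}}(x))$ up to scaling; this is precisely where the hypothesis $n_{mid} \ge N_{inv}$ is used. Since each $\varphi_j^{(k)}$ is continuous on the compact projection of $V$, it is uniformly approximable by $\phi_j$, and because a finite sum of products of uniformly bounded, uniformly approximated factors is again uniformly approximated, $\Sigma\Pi \circ f_{in}$ approximates the generator map $P := (p_1, \dots, p_{N_{inv}})$ uniformly on $V$.

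Second, following Yarotsky, I would reduce the target to a continuous function of the generators. Because $G$ is finite its polynomial invariants separate $G$-orbits, so $P$ separates orbits; as $P\colon V \to P(V)$ is a continuous map from a compact space onto a Hausdorff image it is closed, hence a quotient map, and since the $G$-invariant function $f$ is constant on orbits it descends to a continuous $F\colon P(V) \to \mathbb{R}$ with $f = F \circ P$. Equivalently, the invariant polynomials are dense in the continuous invariants by the Stone--Weierstrass theorem, and each approximant, lying in $\mathbb{R}[x]^G = \mathbb{R}[p_1,\dots,p_{N_{inv}}]$, can be written as a polynomial $Q$ in $p_1, \dots, p_{N_{inv}}$. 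The compact set $P(V)$ makes $F$ (or $Q$) uniformly continuous, and by the universal approximation theorem for non-polynomial activations (Leshno et al.) the fully connected network $f_{out}$ of (\ref{eq:fout}) approximates it uniformly on $P(V)$.

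Finally I would chain the estimates: the uniform continuity of $f_{out}$ absorbs the error of $\Sigma\Pi \circ f_{in}$ relative to $P$, and combined with the approximation of $F$ by $f_{out}$ this makes $\|f_{out}(\Sigma\Pi(f_{in}(\cdot))) - f\|_\infty$ arbitrarily small on $V$, which is the claim. The main obstacle is the expressivity step of the second paragraph: one must ensure that restricting $\Sigma\Pi$ to \emph{separable} products $\prod_j \varphi_j(x_j)$ averaged over $G$ loses no generality. This is exactly resolved by Noether's explicit generators, since every monomial is separable and its Reynolds average is a single $\Sigma\Pi$ channel; the remaining steps are routine approximation bookkeeping on compact sets.
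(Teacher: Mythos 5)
Your proposal is correct and follows essentially the same route as the paper: the same decomposition into (i) Noether-type generators of $\mathbb{R}[x]^G$ realized as Reynolds averages of monomials, (ii) the observation that such separable generators are exactly what the $\Sigma\Pi$ channels compute (one channel per generator, using $n_{mid}\ge N_{inv}$), and (iii) the universal-approximation step for $f_{out}$ with a final error-chaining argument via continuity of the output network. The only differences are matters of bookkeeping rather than substance: you re-derive the reduction of a continuous invariant to a function of the generators (which the paper simply cites as Yarotsky's theorem), and you justify the monomial form of the generators via Noether's theorem where the paper merely asserts it in its Lemma~\ref{thm:lemma}.
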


\begin{proof}
In the proof, without the loss of generality, we consider the case when $n_{out} = 1$, as the approach can be generalized for arbitrary $n_{out}$. Moreover, we assume that 
\begin{equation}
\label{eq:V}
    0 \notin V
\end{equation}
to avoid the change of sign when approximating polynomials of inputs, but it is not a limitation because any compact set can be transformed to such a set by a bijective function.

To prove the Proposition \ref{thm:prop}, we need to employ two theorems:
\begin{theorem}[\citealp{yarotsky}]
\label{thm:UAT}
Let $\sigma: \mathbb{R} \rightarrow \mathbb{R}$ be a continuous activation function that is not a polynomial. Let $V=\mathbb{R}^d$ be a real finite dimensional vector space.
Then, any continuous map ${f:V \rightarrow \mathbb{R}}$ can be approximated, in the sense of uniform convergence on compact sets, by
\begin{equation}
\label{eq:pinkus}
  \hat{f}(x_1, x_2, \dots, x_d) = \sum_{i=1}^{N}c_i \sigma\left(\sum_{j=1}^{d} w_{ij} x_j + h_i\right)
\end{equation}
with a parameter $N \in \mathbb{N}_+$ and coefficients $c_i, w_{ij}, h_{i} \in \mathbb{R}$.
\end{theorem}
\vspace{-0.15in}
The above version of the theorem comes from the work of \citet{yarotsky}, but it was proved by \citet{pinkus}.

\begin{theorem}[\citealp{yarotsky}]
\label{thm:poly}
Let $\sigma: \mathbb{R} \rightarrow \mathbb{R}$ be a continuous activation function that is not a polynomial, $G$ be a compact group, $W$ be a finite-dimensional $G$-module and $f_1,\ldots,f_{N_{inv}}: W \rightarrow \mathbb{R}$ be a finite generating set of polynomial invariants on $W$ (existing by Hilbert’s theorem).
Then, any continuous invariant map $f:W \rightarrow \mathbb{R}$ can be approximated by an invariant map $\hat{f}:W \rightarrow \mathbb{R}$ of the form
\begin{equation}
\label{eq:yarotsky}
  \hat{f}(x) = \sum_{i=1}^{N}c_i \sigma\left(\sum_{j=1}^{N_{inv}} w_{ij} f_j(x) + h_i\right)
\end{equation}
with a parameter $N \in \mathbb{N}_+$ and coefficients $c_i, w_{ij}, h_{i} \in \mathbb{R}$.
\end{theorem}
\vspace{-0.1in}
The accuracy of the approximation (\ref{eq:yarotsky}) has been proven to be $2\epsilon$ for some arbitrarily small positive constant $\epsilon$. 
Note that the function $f_{out}$ \eqref{eq:fout}, is of the same form as the function $\hat{f}$ \eqref{eq:yarotsky}. Then, one can accurately imitate the behavior of $\hat{f}$ using $f_{out}$, if the input to both functions are equivalent.

\begin{lemma}
\label{thm:lemma}
For every element $f_i: V \rightarrow \mathbb{R}$ of the finite generating set $\mathcal{F} = \{ f_i\}_{i=1} ^{N_{inv}}$ of polynomial $G$-invariants on $V$, there exists an approximation of the form (\ref{eq:sigmapifin_G}), linearly dependent on $\epsilon$, where $G \leq S_n$ is an $m$ element subgroup of the $n$ element permutation group and $\epsilon$ is an arbitrarily small positive constant.
\end{lemma}

\begin{proof}
Any function $f_i \in \mathcal{F}$ has the following form
\begin{equation}
\label{eq:inv_f_i_sigma}
    f_i(x) = \sum_{g \in G} \psi(g(x)),
\end{equation}
where
\begin{equation}
\label{eq:inv_f_i_pi}
    \psi(x) = \prod_{i=1}^{n} x_i^{b_i},
\end{equation}
and $b_i$ are fixed exponents.
Combining (\ref{eq:inv_f_i_sigma}) and (\ref{eq:inv_f_i_pi}), we obtain:
\begin{equation}
\label{eq:inv_f_i}
    f_i(x) = \sum_{g \in G} \prod_{i=1}^{n} x_{\sigma_g(i)}^{b_i},
\end{equation}
which has a similar form as \eqref{eq:sigmapifin_G}.
This resemblance is not accidental, but in fact, $\Sigma\Pi(f_{in}(x))$ can
approximate $n_{mid}$ functions belonging to the set $\mathcal{F}$.
Using Theorem \ref{thm:UAT} and the fact that $\phi_i$ is a neural network satisfying (\ref{eq:pinkus}), we observe that $\phi_j(x_i)$ can approximate any continuous function with $\epsilon$ precision. Thus, it can approximate $x_{i}^{b_i}$ for some constant parameter $b_i$. It is possible to provide an upper bound on the approximation error $\left| f_i(x) - \Sigma\Pi_i(f_{in}(x)) \right|$
by
\begin{equation}
\label{eq:sigmapifin_err}
\begin{split}
    &\left| f_i(x) - \Sigma\Pi_i(f_{in}(x)) \right| \stackrel{(\ref{eq:inv_f_i}, \ref{eq:sigmapifin_G})}{=} \\
    &\left| \sum_{g \in G} \prod_{i=1}^{n} x_{\sigma_g(i)}^{b_i} - \sum_{g \in G}\prod_{j=1}^{n}  \phi_j(x_{\sigma_g(j)}) \right| \leq\\
    & \sum_{g \in G} \left|\prod_{i=1}^{n} x_{\sigma_g(i)}^{b_i} - \prod_{j=1}^{n}  \phi_j(x_{\sigma_g(j)}) \right| \leq\\
    & \sum_{g \in G} \left|\prod_{i=1}^{n} x_{\sigma_g(i)}^{b_i} - \prod_{j=1}^{n}  (x_{\sigma_g(j)}^{b_j} - \epsilon) \right| \stackrel{(\ref{eq:V})}{\leq} mn\epsilon\\
\end{split},
\end{equation}
for some arbitrarily small positive constant $\epsilon$.
\end{proof}
Assuming that the number of features $n_{mid}$ at the output of input transformation network $f_{in}$ is greater than or equal to the size of the generating set $\mathcal{F}$, it is possible to estimate each of $f_i(x)$ functions using \eqref{eq:sigmapifin_G}.

The last step for completing the proof of the Proposition \ref{thm:prop}, using Theorem \ref{thm:UAT}, Theorem \ref{thm:poly}, and the proposed Lemma \ref{thm:lemma}, is to show that
\begin{equation}
    \left| f(x) - \Gamma(x) \right| \leq \epsilon c,
\end{equation}
where $c \in \mathbb{R}$ is a constant.

Let us consider the error
\begin{equation}
\label{eq:last_bound}
\begin{split}
    & \left| f(x) - \Gamma(x) \right| \stackrel{(\ref{eq:Gamma})}{=} \left| f(x) - \hat{f}(x)\right| + \\
    & + \left|\hat{f}(x) - f_{out}(\Sigma\Pi(f_{in}(x))) \right| \stackrel{\text{Thm.} \ref{thm:poly}}{=} \\
    & 2\epsilon + \left|\hat{f}(x) - f_{out}(\Sigma\Pi(f_{in}(x))) \right| = \\
    & \left.  f_{out}(\mathcal{F}(x)) - f_{out}(\Sigma\Pi(f_{in}(x))) \right| \leq \\
    & 2\epsilon + \left|\hat{f}(x) - f_{out}(\mathcal{F}(x)) \right| + \\
    & \left| f_{out}(\mathcal{F}(x)) - f_{out}(\Sigma\Pi(f_{in}(x))) \right| \stackrel{\text{Thm.} \ref{thm:UAT}, (\ref{eq:fout})}{\leq} \\
    & 3\epsilon + \left| f_{out}(\mathcal{F}(x)) - f_{out}(\Sigma\Pi(f_{in}(x))) \right|
\end{split}.
\end{equation}
Several transformations presented in (\ref{eq:last_bound}) result in the formula which is a sum of $3\epsilon$ and the absolute difference of $f_{out}(\mathcal{F})$ and $f_{out}(\Sigma\Pi(f_{in}(x)))$. From (\ref{eq:sigmapifin_err}), we have that the difference of the arguments is bounded by $mn\epsilon$. Consider then a ball $B_{mn\epsilon}(x)$ with radius $mn\epsilon$ centered at $x$. Since $f_{out}$ is a MLP (multi-layer perceptron), which is at least locally Lipschitz continuous, we know that its output for $x' \in B_{mn\epsilon}(x)$ can change at most by $kmn\epsilon$, where $k$ is a Lipschitz constant. From those facts, we can provide an upper bound on the error (\ref{eq:last_bound}) by
\begin{equation}
\begin{split}
    & \left| f(x) - \Gamma(x) \right| = \\
    & 3\epsilon + \left| f_{out}(\mathcal{F}(x)) - f_{out}(\Sigma\Pi(f_{in}(x))) \right| \leq \\
    & 3\epsilon + kmn\epsilon = \epsilon(3 + kmn) = \epsilon c \\
\end{split}.
\end{equation}
\end{proof}

\subsection{Analysis of Computational and Memory Complexity}
\label{sec:efficiency}
Having proved that the proposed approach is universal we elucidate its computational and memory complexity.

The tensor with the largest size is obtained at the output of the $f_{in}$ function. The size of this tensor is equal to $n^2 n_{mid}$, where we assume that $n_{mid} \geq N_{inv}$ and it is a design parameter of the network. So, the memory complexity is of the order $n^2 n_{mid}$, which is polynomial. However, the complexity of the method proposed by \citet{maron19}, is of the order $n^p$, where $\frac{n-2}{2} \leq p \leq \frac{n(n-1)}{2}$ depending on the group $G$.

In order to evaluate the function $\Sigma\Pi$, $m (n-1) n_{mid}$ multiplications are needed, where $m = |G|$ and $n_{mid}$ is a parameter, but we should assure that $n_{mid} \geq N_{inv}$ to ensure universality of the proposed method (see Section 3.3).
It is visible, that the growth of the number of computations is linear with $m$.
For smaller subgroups of $S_n$, such as $\mathbb{Z}_n$ or $D_{2n}$, where $m \propto n$, the number of the multiplications is of order $n^2$, which is a lot better than the number of multiplications performed by the $G$-invariant neural networks proposed in \citet{maron19}, which is of order $n^p$. 
However, for big groups, where $m$ approaches $n!$, the number of multiplications increases. 
Although the proposed approach can work for all subgroups of $S_n$ ($m$=$n!$), it suits the best for smaller, yet not less important, groups such as cyclic groups $Z_n$, $D_{2n}$, $S_k$ ($k < n$) or their direct products.

Moreover, the proposed $\Sigma\Pi$ can be implemented efficiently on GPUs using a parallel implementation of matrix multiplication and reduction operations in practice. Thereby, we obtain almost similar running time for increasing $n_{mid}$ and $m$ in the experimental analyses given in the next section.

\section{Experimental Analyses}
\label{sec:experiments}
\subsection{Definitions of Tasks}
We evaluate the accuracy of the proposed method and analyze its invariance properties in the following two tasks.

\subsubsection{$G$-invariant Polynomial Regression}
The goal of this task is to train a model to approximate a $G$-invariant polynomial. In the experiments, we consider various polynomials: $P_{\mathbb{Z}_k}, P_{S_k}, P_{D_{2k}}, P_{A_k}$ and $P_{S_k \times S_l}$, which are invariant to the cyclic group $\mathbb{Z}_k$, permutation group $S_k$, dihedral group $D_{2k}$, alternating group $A_k$ and direct product of two permutation groups $S_k \times S_l$, respectively. The formal mathematical definitions of those polynomials are given in the appendix \ref{sec:ds}. To examine generalization abilities of the proposed $G$-invariant network architecture, the learning was conducted using only 16 different random points in $[0; 1]^5$, whereas 480 and 4800 randomly generated points were used for validation and testing, respectively.

\subsubsection{Estimation of Area of Convex Quadrangles}
In this task, models are trained to estimate areas of convex quadrangles. An input is a vector of 4 points lying in $\mathbb{R}^{4 \times 2}$, each described by its $x$ and $y$ coordinates. Note that shifting the sequence of points does not affect the area of the quadrangle (we assume that reversing the order does, but such examples do not occur in the dataset, so it can be neglected). The desired estimator is a simple example of the $G$-invariant function, where $G = \mathbb{Z}_4 = (1234)$.
In the experiments, both training and validation set contains 256 examples (randomly generated convex quadrangles with their areas), while the test dataset contains 1024 examples. Coordinates of points take values from $[0; 2]$, whereas areas take value from $(0; 1]$. More detailed information about the proposed datasets can be found in the appendix \ref{sec:ds} and code\footnote{\label{code}\url{https://github.com/Kicajowyfreestyle/G-invariant}}.

\subsection{Compared Architectures and Models}
All of the experiments presented below consider networks of different architectures for which the number of weights was fixed at a similar level for the given task, to obtain fair comparison,
The considered architectures are the following:
\vspace{-0.1in}
\begin{itemize}
\setlength\itemsep{0.0em}
    \item FC $G$-avg: Fully connected neural network with Reynolds operator \cite{reynolds},
    \item Conv1D $G$-avg: 1D convolutional neural network with Reynolds operator,
    \item FC $G$-inv: $G$-invariant neural network (\ref{eq:Gamma}) implementing $f_{in}$ using a fully connected neural network,
    \item Conv1D $G$-inv: $G$-invariant neural network (\ref{eq:Gamma}) implementing $f_{in}$ using 1D Convolutional Neural Network,
    \item Maron: $G$-invariant network \cite{maron19}.
\end{itemize}
All of those functions are used in both tasks and differ between the tasks only in the number of neurons in some layers. More detailed information about the aforementioned architectures is included in the appendix \ref{sec:models} and code\textsuperscript{\ref{code}}.

Moreover, for all experiments, both running times and error values are reported by calculating their mean and standard deviation over 10 independent models using the same architecture, chosen by minimal validation error during  training, to reduce impact of initialization of weights.

\begin{table*}[ht]
\vskip -0.1in
\caption{Mean absolute errors (MAEs) [$10^{-2}$] of several $G$-invariant models for the task of $G$-invariant polynomial regression.}
\label{tab:poly}
\begin{center}
\begin{small}
\begin{sc}
\begin{tabular}{lcccc}
\toprule
Network & Train & Validation & Test & \#Weights [$10^3$]\\
\midrule
FC $G$-avg              & 15.15 $\pm$ 5.49  & 16.48 $\pm$ 0.73  & 16.89 $\pm$ 0.76   & 24.0\\
\textbf{$G$-inv (ours) }         & 2.65 $\pm$ 0.91   & 7.32 $\pm$ 0.55   & 7.46  $\pm$ 0.56   & 24.0\\
Conv1D $G$-avg & 8.98 $\pm$ 6.39 & 11.43 $\pm$ 4.29 & 11.78 $\pm$ 4.79 & 24.0\\
\textbf{Conv1D $G$-inv (ours) }  & \textbf{0.87 $\pm$ 0.12} & \textbf{2.57 $\pm$ 0.37} & \textbf{2.6 $\pm$ 0.4} & 24.0\\
Maron                   & 2.41 $\pm$ 0.82   & 5.74 $\pm$ 1.19   & 5.93  $\pm$ 1.18   & 24.2\\
\bottomrule
\end{tabular}
\end{sc}
\end{small}
\end{center}
\vskip -0.2in
\end{table*}

\subsection{Results for $\mathbb{Z}_5$-invariant Polynomial Regression}
\label{sec:poly}
In the task of $\mathbb{Z}_5$-invariant polynomial regression, the training lasts for 2500 epochs, after which only slight changes in the accuracy of the models were reported. We measure accuracy of the models using mean absolute error (MAE) defined by \citet{MAE}.
The accuracy of the examined models is given in Table \ref{tab:poly}.

We observe that our proposed Conv1D $G$-inv outperforms all of the other architectures on both datasets. Both Maron and FC $G$-inv obtain worse MAE, but they significantly outperform the Conv1D $G$-avg and FC $G$-avg.
Moreover, those architectures obtain large standard deviations for the training dataset, because sometimes they converge to different error values. In contrast, the performance of the $G$-inv based models and the Maron model is relatively stable under different weight initialization.

While the results are similar for our proposed architecture and the approach introduced in \citet{maron19}, the number of computations needed to train and evaluate the Maron model is significantly larger compared to our $G$-invariant network. The inference time for both networks differs notably, and equals $2.3 \pm 0.4$ms for Conv1D $G$-inv and $21.4 \pm 1.5$ms for Maron, where the evaluation of those times was performed on 300 inferences with batch size set to 16 using an Nvidia GeForce GTX1660Ti.

\begin{table*}[t]
\caption{Mean absolute errors (MAEs) [$10^{-3} \text{unit}^2$] of several $G$-invariant models for the task of convex quadrangle area estimation.}
\label{tab:area}
\begin{center}
\begin{small}
\begin{sc}
\begin{tabular}{lcccc}
\toprule
Network & Train & Validation & Test & \#Weights\\
\midrule
FC $G$-avg              & 7.0 $\pm$ 0.6 & 9.6 $\pm$ 1.0 & 9.4 $\pm$ 0.9   & 1765\\
\textbf{$G$-inv (ours)} & 7.4 $\pm$ 0.4 & 8.0 $\pm$ 0.3 & 8.3 $\pm$ 0.5   & 1785\\
Conv1D $G$-avg          & 16.9 $\pm$ 7.7 & 16.8 $\pm$ 5.3 & 18.5 $\pm$ 6.8 & 1667\\
\textbf{Conv1D $G$-inv (ours)}  & \textbf{6.0 $\pm$ 0.3} & \textbf{7.3 $\pm$ 0.3} & \textbf{7.5 $\pm$ 0.5}   & 1673\\
Maron                   & 13.9 $\pm$ 0.9 & 22.3 $\pm$ 1.2 & 23.4 $\pm$ 1.3   & 1802\\
\bottomrule
\end{tabular}
\end{sc}
\end{small}
\end{center}
\vskip -0.2in
\end{table*}

\subsection{Results for Estimation of Areas of Convex Quadrangles}
\label{sec:quads}
In the task of estimating areas of convex quadrangles, each model was trained for 300 epochs and the accuracy of the models on training, validation and test sets are reported in Table \ref{tab:area}.
The results show that the model utilizing the approach presented in this paper obtains the best performance on all three datasets. 
Furthermore, it generalizes much better to the validation and test dataset than any other tested approach. However, one has to admit that the differences between $G$-inv models and fully connected neural network exploiting Reynolds operator (FC $G$-avg) are relatively small for all three datasets.
We observe that, besides the proposed $G$-invariant architecture, the only approach which was able to reach a low level of MAE in the polynomial approximation task (Maron) is unable to accurately estimate the area of the convex quadrangle, which is a bit more abstract task, possibly not easily translatable to some $G$-invariant polynomial regression.

\subsection{Analysis of the Effect of the Group Size on the Performance}
\label{sec:group_size}
The goal of this experiment is to asses how the performance of the FC $G$-inv model changes with increasing size of a given group. To evaluate that, an approximation of several $G$-invariant polynomials was realized (in the same setup as for $\mathbb{Z}_5$-invariant polynomial regression, see Section \ref{sec:poly}). We measure accuracy of models using mean absolute percentage error (MAPE) defined by \citet{MAPE}.

The results on the test dataset are reported in Table \ref{tab:ngons}. The results show that while the upper bound of approximation error grows with the size of the group $m$, the error in the experiment exposes more complicated behavior. We observe that also the polynomial form affects the performance. For example, $P_{A_4}$ seems to be relatively easy to approximate using the proposed neural network. However, if we neglect $P_{A_4}$, the MAPE increases with the $m$, but slower than linear.
The evaluation times of the neural networks are independent from the group size, due to the ease of parallelization of the most expensive operation $\Sigma\Pi$, in which the number of multiplications grows linearly with $m$.

\begin{table}[t]
\vskip -0.1in
\caption{Mean absolute percentage errors (MAPEs) [\%] and inference times [ms] for the task of $G$-invariant polynomial approximation using FC $G$-inv model, for a few groups of different sizes.}
\label{tab:ngons}
\begin{center}
\begin{small}
\begin{sc}
\begin{tabular}{lcccc}
\toprule
 & $|G|$ & Train & Test & Time \\
\midrule
$P_{Z_5}$ & 5 & 3.2 $\pm$ 0.8 & 12.8 $\pm$ 4.6 & 2.3 $\pm$ 0.4\\
$P_{D_8}$ & 8 & 3.9 $\pm$ 1.7 & 10.4 $\pm$ 2.8 & 2.2 $\pm$ 0.2\\
$P_{A_4}$ & 12 & 2.5 $\pm$ 0.7 & 4.7 $\pm$ 1.1 & 2.3 $\pm$ 0.3\\
$P_{S_4}$ & 24 & 5.6 $\pm$ 2.7 & 14.9 $\pm$ 5.9 & 2.4 $\pm$ 0.4\\
\bottomrule
\end{tabular}
\end{sc}
\end{small}
\end{center}
\vskip -0.25in
\end{table}

\subsection{Analysis of the Effect of the Latent Space Size on the Performance}
\label{sec:latent_size}

In this experiment, we evaluate how the size of the $G$-invariant latent space $n_{mid}$ affects the MAE and inference time of both FC $G$-inv and Conv1D $G$-inv architectures. Those architectures were tested on the task of convex quadrangle area estimation for $n_{mid} \in \{1,2,8,32,128\}$, without changing the remaining parts of the networks.

Results given in Table \ref{tab:nmid_area} show that even low-dimensional $G$-invariant latent representation enables the network to estimate the area in the considered tasks.
While the accuracy of the Conv1D $G$-inv is almost the same regardless of the latent space size, the accuracy of FC $G$-inv improves significantly for $n_{mid}$ growing from 1 to 8.
Another interesting observation is that the inference time is independent of $n_{mid}$, which is achieved by using parallel computations on GPUs.

\begin{table}[ht]
\caption{Mean absolute errors (MAEs) [$10^{-3}$] and inference time [ms] on the test dataset for the task of convex quadrangle area estimation for different values of $n_{mid}$.}
\label{tab:nmid_area}
\begin{center}
\begin{small}
\begin{sc}
\begin{tabular}{lcccc}
\toprule
 & \multicolumn{2}{c}{Conv1D $G$-inv} & \multicolumn{2}{c}{FC $G$-inv}\\

$n_{mid}$ & MAE & Time & MAE & Time\\
\midrule

1 & 7.6 $\pm$ 0.3 & 2.9 $\pm$ 0.1 & 32.5 $\pm$ 0.7 & 3.0 $\pm$ 0.3 \\
2 & 7.5 $\pm$ 0.5 & 2.9 $\pm$ 0.1 & 10.1 $\pm$ 3.7 & 3.0 $\pm$ 0.2 \\
8 & 7.5 $\pm$ 0.4 & 2.9 $\pm$ 0.2 & 8.5 $\pm$ 0.3 & 2.9 $\pm$ 0.2 \\
32 & 7.3 $\pm$ 0.3 & 3.1 $\pm$ 0.7 & 8.1 $\pm$ 0.3 & 3.0 $\pm$ 0.1 \\
128 & 7.4 $\pm$ 0.3 & 2.9 $\pm$ 0.1 & 8.2 $\pm$ 0.4 & 3.4 $\pm$ 0.5 \\

\bottomrule
\end{tabular}
\end{sc}
\end{small}
\end{center}
\vskip -0.1in
\end{table}

\subsection{Robustness to Inaccurate Network Design}
\label{sec:robust}

We analyze performance of the FC $S_3$-inv network for $G$-invariant polynomial approximation, where $G \in \{\mathbb{Z}_3, S_3, S_3 \times S_2\}$ and $\mathbb{Z}_3 \le S_3 \le S_3 \times S_2 \le S_5$. 
The goal of the experiment is to assess the robustness of the proposed architecture to inaccurate network design, and validate the claims proposed in Section \ref{sec:GHS}, namely that the proposed $G$-invariant network is able to adjust to become approximately $H$-invariant, if the data expose the $H$-invariance, for $G < H \leq S_n$.

Figure \ref{fig:EGH} shows training and validation mean absolute percentage error (MAPE) computed during training of the same $S_3$-invariant model for learning to approximate $\mathbb{Z}_3$, $S_3$, $S_3 \times S_2$-invariant polynomials. The learning curves show that the proposed architecture is able to achieve the same level of accuracy when the approximated polynomial is $S_3$ or $S_3 \times S_2$-invariant. However, it is unable to reach that level for the $\mathbb{Z}_3$-invariant polynomial. The results confirm our claim that models, which are invariant to actions of an over-group $H$, can be learned from data using the proposed $G$-invariant network. Moreover, one can see that the $G$-invariant network is unable to adjust to the $E$-invariant data, where $E < G$, because it is unable to differentiate between data permuted with the element $g \in G \wedge g \notin E$. 
\begin{figure}[t]
\begin{center}
\centerline{\includegraphics[width=\columnwidth]{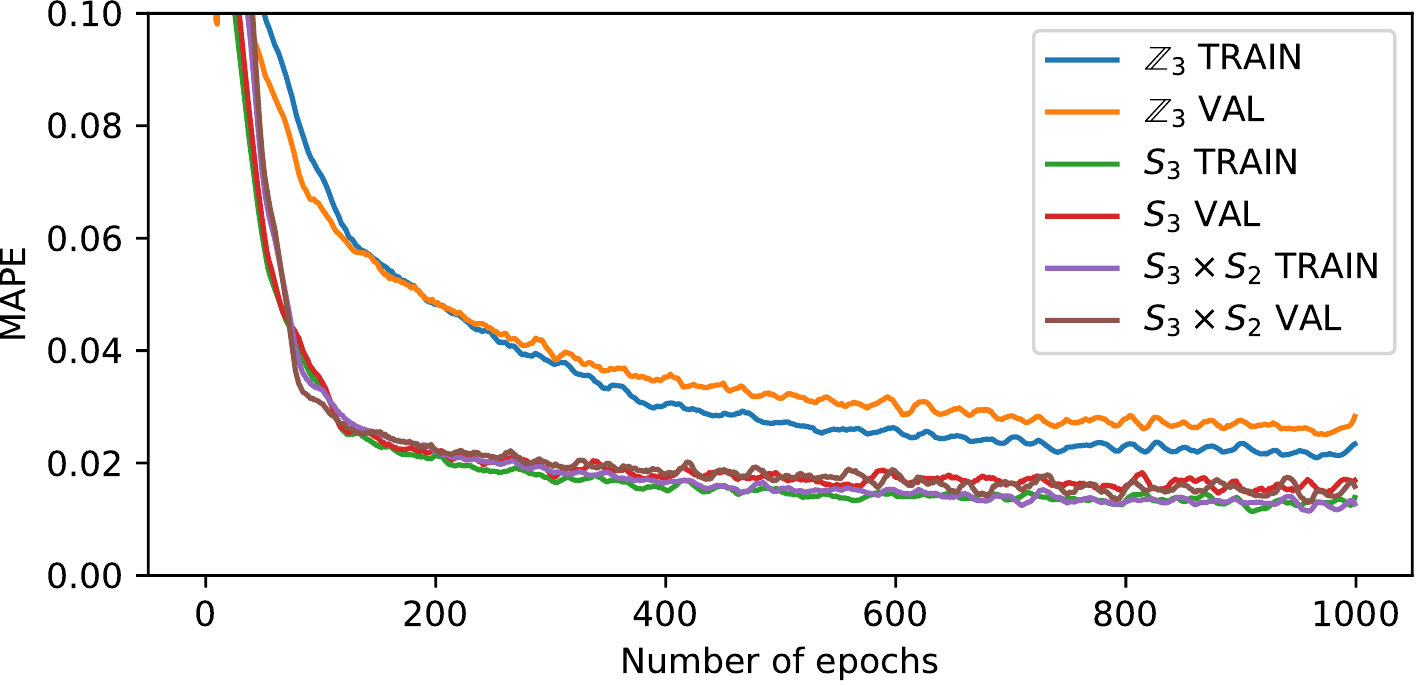}}
\vskip -0.05in
\caption{Learning curves of FC $S_3$-inv approximating $S_3 \times S_2$, $S_3$ and $\mathbb{Z}_3$ invariant polynomials. Even for the group $S_3\times S_2 > S_3$, $S_3$-invariant network is able to reach the same mean absolute percentage error (MAPE) as for $S_3$, for which the network was designed. However, it is unable to reach similar performance for $\mathbb{Z}_3$- invariant polynomial, because $S_3$-invariant network cannot differentiate between some permutations, which are not in $\mathbb{Z}_3$.}
\label{fig:EGH}
\end{center}
\vskip -0.35in
\end{figure}

\section{Discussion and Conclusion}
\label{conc}
In this paper, we have proposed a novel $G$-invariant neural network architecture that uses two standard neural networks, connected with the proposed Sum-Product Layer denoted by $\Sigma\Pi$. We have shown that the proposed architecture is a universal approximator as long as the number of features $n_{mid}$ at the output of input transformation network $f_{in}$ is greater than or equal to the size of the generating set $\mathcal{F}$ of polynomial $G$-invariants. Moreover, we analyzed the cases where the proposed network can obtain $H$-invariance properties for hierarchical groups $G < H \leq S_n$. We conjecture that it is challenging to obtain a $H$-invariant model using a randomly initialized $G$-invariant network unless the training data reveal $H$-invariance property. The ability of the $G$-invariant network to learn the $H$-invariance from data was experimentally verified in Section \ref{sec:GHS}.

We have also analyzed the computational efficiency of the proposed $G$-invariant neural network and compared it with the state-of-the-art $G$-invariant neural network architecture, which was proven to be universal. Analysis of the proposed network led us to the memory complexity of order $n^2 n_{mid}$ and computational complexity of order $m n n_{mid}$. Those polynomial dependencies suggest that the proposed approach is efficient and tractable, but it needs to be emphasized that the computational complexity can be cumbersome to handle for big groups such as $S_n$ or $A_n$, where $m \propto n!$. To support those considerations, inference times were reported for both tasks (see Table \ref{tab:ngons} and Table \ref{tab:nmid_area}). Interestingly, those running times are independent of $n_{mid}$ and $m$ due to the parallelization of the $\Sigma\Pi$ function.

Finally, we have conducted several experiments to explore various properties of the proposed $G$-invariant architecture in comparison with the other $G$-invariant architectures proposed in the literature. For this purpose, we used two tasks; (i) convex quadrangle area estimation and (ii) $G$-invariant polynomial regression. The results demonstrate that the proposed $G$-invariant neural network outperforms all other approaches in both tasks, no matter if it utilizes fully connected or convolutional layers. However, the Maron \cite{maron19} outperformed the $G$-inv neural network endowed with fully connected layers for polynomial regression. Note that, inference time of the Maron is an order of magnitude higher than that of the proposed method. 
It is also worth noting that employing convolutional layers for feature extraction in lower layers improves the accuracy of the whole architecture, probably by exploiting the intrinsic structure of the input data, such as neighborhood relations.

Furthermore, we analyzed the change of accuracy of the learned models depending on the latent vector size $n_{mid}$. The results pointed out that the proposed tasks can be solved using models with small $G$-invariant latent vectors, and that their inference time is nearly independent of the vector size, due to the easily parallelizable structure of the proposed $G$-invariant network.

We believe that the proposed $G$-invariant neural networks can be employed by researchers to learn group invariant models efficiently in various applications in machine learning, computer vision and robotics. In future work, we plan to apply the proposed networks for various tasks in robot learning, such as for path planning by vector map processing using the geometric structure of data.




\bibliography{example_paper}
\bibliographystyle{icml/icml2020}

\clearpage
\appendix
\section{Mathematical definitions}
\label{sec:math}

A \textbf{group} is a non-empty set $G$ with the binary operator ${\circ: G \times G \rightarrow G}$ called product, such that
\begin{enumerate}[leftmargin=*]
\setlength\itemsep{0.0em}
    \item $a, b \in G \implies a \circ b \in G$ (closed under product),
    \item $a, b, c \in G \implies (a \circ b) \circ c = a \circ (b \circ c)$ (associative),
    \item $\xeist_{e \in G}\allfor_{a \in G} a \circ e = e \circ a = a$ (existence of identity element),
    \item $\allfor_{a \in G}\xeist_{a^{-1} \in G} a \circ a^{-1} = a^{-1} \circ a = e$ (existence of inverse element).
\end{enumerate}

A \textbf{subgroup of the group $G$} is a non-empty subset $S \subset G$, which together with the product $\circ$, associated with the group $G$, forms a group.

A \textbf{permutation group} is a group whose elements are permutations.

\section{Parameters Used in the Experiments}
All experiments reported in the paper were performed using Nvidia GeForce GTX-1660 Ti with a learning rate equal to $10^{-3}$ and regularization parameter of the $\ell_2$ regularization was set to $10^{-5}$.

\section{Architectures Considered in the Experiments}
\label{sec:models}
In this section, we describe all neural network based models that were used in the experiments for comparative analysis of architectures.
It is worth noting that each of these neural networks uses the $\tanh$ activation function in its hidden layers,
except the output layers and layers right before the $G$-invariant latent representation, which do not use activation functions.

\textbf{FC $G$-avg} is an abbreviation of a fully connected neural network aggregated by the group averaging or more specifically Reynolds operator defined by
\begin{equation}
\label{eq:reynolds}
    f_R(x) = \frac{1}{|G|}\sum_{g \in G} f\left(g(x)\right),
\end{equation}
where $G$ is a finite group and $|G|$ denotes the size of the group (number of its elements). 
Hyperparameters of architectures of the networks used in experiments described in Section \ref{sec:poly} and \ref{sec:quads} of the main paper are given in Table \ref{tab:FCGavg}.
For both architectures, an output of a network is an average of forward passes for all $g \in G$ acting on the input of the network, according to \eqref{eq:reynolds}.

\begin{table}[htbp!]
\vspace{-0.1in}
\caption{Hyperparameters (FC number of kernels) of the architectures of the FC $G$-avg networks used in both the $\mathbb{Z}_5$-invariant polynomial approximation and the convex quadrangle area estimation experiments described in Section \ref{sec:poly} and \ref{sec:quads} of the main paper.}
\label{tab:FCGavg}
\begin{center}
\begin{small}
\begin{sc}
\begin{tabular}{cc}
\toprule
Polynomial approximation & Area estimation\\
\midrule
FC 89 & Flatten\\
FC 192 & FC 64 \\
FC 32 & FC 18 \\
FC 1 & FC 1 \\
\bottomrule
\end{tabular}
\end{sc}
\end{small}
\end{center}
\vskip -0.05in
\end{table}

\begin{table*}[t]
\caption{Hyperparameters of architectures of the Conv1D $G$-avg networks used in both $\mathbb{Z}_5$-invariant polynomial approximation and convex quadrangle area estimation experiments described in Section \ref{sec:poly} and \ref{sec:quads} of the main paper.}
\label{tab:Conv1DGavg}
\begin{center}
\begin{small}
\begin{sc}
\begin{tabular}{cc}
\toprule
Polynomial approximation & Area estimation \\
\midrule
Conv1D layer: 32 kernels of size 3x1 & Conv1D layer: 32 kernels of size 3x1 \\
Conv1D layer: 118 kernels of size 1x1 & Conv1D layer: 2 kernels of size 1x1 \\
Flatten Layer & Flatten Layer \\
FC layer: 32 output channels & FC layer: 32 output channels \\
FC layer: 1 output channel & FC layer: 1 output channel \\
\bottomrule
\end{tabular}
\end{sc}
\end{small}
\end{center}
\vskip -0.15in
\end{table*}

\textbf{Conv1D $G$-avg} is an abbreviation of a composition of a 1D convolutional neural network with a fully connected neural network and the group averaging defined in (\ref{eq:reynolds}).
It uses 1D convolutions to preprocess the input exploiting the knowledge about the group $G$,
namely, it performs the cyclic convolution on the graph imposed by the group $G$ -- each kernel acts on a triplet of the selected vertex and its two neighbors in terms of group operation.
Architectures of the networks used in experiments described in Section \ref{sec:poly} and \ref{sec:quads} of the paper are given in Table \ref{tab:Conv1DGavg}.
Similar to the FC $G$-avg, the output of a network is an average of the forward passes for all $g \in G$ acting on the input, according to \eqref{eq:reynolds}.
For the sake of implementation, the first and last elements of the input sequence are concatenated with the original input at the end and beginning respectively,
in order to use a typical implementation of convolutional neural networks (as they normally do not perform cyclic convolution).
For example, if the original input sequence looks like $[A\,B\,C\,D]$, then the network is supplied with sequence $[D\,A\,B\,C\,D\,A]$.

\textbf{FC $G$-inv} is an abbreviation of the $G$-invariant neural network equipped with a fully connected neural network
implementing an $f_{in}$ function proposed in the main paper. The general scheme of the $G$-invariant fully connected network architecture is described in Table \ref{tab:FCGinv}.
Values of $n$, $n_{in}$ and $n_{mid}$ differ between experiments and are listed in Table \ref{tab:FCGinv_n}.

\begin{table}[htbp!]
\caption{Hyperparameters of the FC $G$-inv architecture proposed in the main paper.}
\label{tab:FCGinv}
\begin{center}
\begin{small}
\begin{sc}
\begin{tabular}{cc}
\toprule
Layer & Output size \\
\midrule
Input & $n \times n_{in}$\\
FC & $n \times 16$\\
FC & $n \times 64$\\
FC & $n \times n n_{mid}$ \\
Reshape & $n \times n \times n_{mid}$ \\
$\Sigma\Pi$ & $n_{mid}$ \\
FC & 32 \\
FC & 1 \\
\bottomrule
\end{tabular}
\end{sc}
\end{small}
\end{center}
\end{table}

\begin{table}[htbp!]
\vspace{-0.2in}
\caption{Values of $n$, $n_{in}$ and $n_{mid}$ used for different experiments.}
\label{tab:FCGinv_n}
\begin{center}
\begin{small}
\begin{sc}
\begin{tabular}{cccc}
\toprule
Experiment (Section) & $n$ & $n_{in}$ & $n_{mid}$ \\
\midrule
\ref{sec:poly} & 5 & 1 & 64\\
\ref{sec:quads} & 4 & 2 & 2\\
\ref{sec:group_size} & 5 & 1 & 2\\
\ref{sec:latent_size} & 4 & 2 & \{1, 2, 8, 32, 128\}\\
\ref{sec:robust} & 5 & 1 & 8\\
\end{tabular}
\end{sc}
\end{small}
\end{center}
\end{table}

\textbf{Conv1D $G$-inv} is an abbreviation of the $G$-invariant neural network equipped with a 1D convolutional neural network implementing an $f_{in}$ function proposed in our paper. The general scheme of the $G$-invariant network architecture with a convolutional feature extractor is described in Table \ref{tab:Conv1DGinv}. Values of $n$, $n_{in}$ and $n_{mid}$ differ between experiments and are the same as for the FC $G$-inv model (listed in Table \ref{tab:FCGinv_n}), except the $n_{mid}$ used in the experiment given in Section \ref{sec:poly}, where $n_{mid} = 118$.

\begin{table}[htbp!]
\caption{Hyperparameters of the Conv1D $G$-inv architecture proposed in the main paper.}
\label{tab:Conv1DGinv}
\begin{center}
\begin{small}
\begin{sc}
\begin{tabular}{cc}
\toprule
Layer & Output size \\
\midrule
Input & $(n+2) \times n_{in}$\\
Conv1D 3x1 & $n \times 32$\\
Conv1D 3x1 & $n \times n n_{mid}$\\
Reshape & $n \times n \times n_{mid}$ \\
$\Sigma\Pi$ & $n_{mid}$ \\
FC & 32 \\
FC & 32 \\
FC & 1 \\
\bottomrule
\end{tabular}
\end{sc}
\end{small}
\end{center}
\vspace{-0.1in}
\end{table}

\textbf{Maron} is an abbreviation of the $G$-invariant neural network architecture which is proved by \citet{maron19} to be a universal approximator.
In this case, one has to provide  $N_{inv}$ elements of the generating set of $G$-invariant polynomials,
whose degree is at most $|G|$ (by the Noether theorem \citep{kraft}), which was obtained by applying the Reynolds operator (see (\ref{eq:reynolds})) to all possible polynomials in $\mathbb{R}^{n \times n_in}$ with degree up to $|G|$ to the fully connected neural network.\\
It is worth to note that according to \citet{maron19}, a multiplication used to form the polynomials is approximated by a neural network, whose architecture is presented in Table \ref{tab:Maron_mul}.
A multi-layer perceptron (MLP) whose architecture is presented in Table \ref{tab:Maron_MLP}, was applied on these polynomials.

\begin{table*}[t]
\caption{Hyperparameters of the multiplication network used in the Maron architecture proposed by \citet{maron19}.}
\label{tab:Maron_mul}
\begin{center}
\begin{small}
\begin{sc}
\begin{tabular}{cc}
\toprule
Polynomial approximation & Area estimation \\
\midrule
FC layer: 64 output channels & FC layer: 32 output channels \\
FC layer: 32 output channels & FC layer: 1 output channel \\
FC layer: 1 output channel & \\
\bottomrule
\end{tabular}
\end{sc}
\end{small}
\end{center}
\vskip -0.15in
\end{table*}

\begin{table*}[htbp!]
\caption{Hyperparameters of the MLP network used in the Maron architecture proposed by \citet{maron19}.}
\label{tab:Maron_MLP}
\begin{center}
\begin{small}
\begin{sc}
\begin{tabular}{cc}
\toprule
Polynomial approximation & Area estimation \\
\midrule
FC layer: 48 output channels & FC layer: 40 output channels \\
FC layer: 192 output channels & FC layer: 1 output channel \\
FC layer: 32 output channels & \\
FC layer: 1 output channel & \\
\bottomrule
\end{tabular}
\end{sc}
\end{small}
\end{center}
\vskip -0.15in
\end{table*}

\begin{table*}[htbp!]
\caption{Exact formulas of the polynomials used in experiments given in Section \ref{sec:poly}, \ref{sec:group_size} and \ref{sec:robust} in the main paper.}
\label{tab:poly}
\begin{center}
\begin{small}
\begin{sc}
\begin{tabular}{ll}
\toprule
Invariance & Polynomial\\
\midrule
$\mathbb{Z}_5$ & $P_{\mathbb{Z}_5}(x) = x_1 x_2^2 + x_2 x_3^2 + x_3 x_4^2 + x_4 x_5^2 + x_5 x_1^2$\\
$\mathbb{Z}_3$ & $P_{\mathbb{Z}_3}(x) = x_1 x_2^2 + x_2 x_3^2 + x_3 x_1^2 + 2x_4 + x_5$\\
$S_3$ & $P_{S_3}(x) = x_1 x_2 x_3 + 2x_4 + x_5$\\
$S_3 \times S_2$ & $P_{S_3 \times S_2}(x) = x_1 x_2 x_3 + x_4 + x_5$\\
$D_8$ & $P_{D_8} = x_1 x_2^2 + x_2 x_3^2 + x_3 x_4^2 + x_4 x_1^2 + x_2 x_1^2 + x_3 x_2^2 + x_4 x_3^2 + x_1 x_4^2 + x_5$\\
$A_4$ & $P_{A_4} = x_1 x_2 + x_3 x_4 + x_1 x_3 + x_2 x_4 + x_1 x_4 + x_2 x_3 + x_1 x_2 x_3 + x_1 x_2 x_4 + x_1 x_3 x_4 + x_2 x_3 x_4 + x_5$\\
$S_4$ & $P_{S_4} = x_1 x_2 x_3 x_4 + x_5$\\
\bottomrule
\end{tabular}
\end{sc}
\end{small}
\end{center}
\vskip -0.15in
\end{table*}

\section{Datasets}
\subsection{Convex Quadrangle Area Estimation}
The dataset used in the task of convex quadrangle area estimation consists of a number of quadrangles with the associated area value.
Each of these quadrangles is defined by 8 numbers, while the associated area is the label for supervised learning.
Data generation procedure for the quadrangles consists of the following steps:
\begin{enumerate}
\setlength\itemsep{-0.1em}
    \item draw the value of the center of the quadrangle according to the uniform distribution,
    \item generate $n$ angles, in the range [0, $\frac{2\pi}{n}$],
    \item add $\frac{2k\pi}{n}$ to the $k$-th angle, for $k \in \{0, 1, \ldots, n-1\}$,
    \item draw uniformly the radius $r$,
    \item draw uniformly $n$ disturbances and add these values to the radius,
    \item generate the $x, y$ coordinates of vertices using generated angles and radii.
    \item take an absolute value of those coordinates (we want to have the coordinates positive),
    \item repeat steps 1--7 until obtained quadrangle is convex,
    \item calculate the area of the obtained quadrangle using the Monte Carlo method.
\end{enumerate}
Each of the training set and the validation set contains 256 examples, and 1024 examples were used in the test dataset.

\subsection{$G$-invariant Polynomial Approximation}
The dataset used in the tasks of $G$-invariant polynomial approximation consists of the input which is randomly generated and expected output,
which is simply calculated using formulas listed in Table \ref{tab:poly}. The generation procedure of the input draws samples from a uniform distribution between 0 and 1.
For the experiments given in Section \ref{sec:poly} and \ref{sec:quads}, the number of samples used for training, validation and test set is 16, 480 and 4800 respectively.
Only in the experiment given in Section \ref{sec:robust}, the number of samples in the training dataset was increased to 160, as the aim of the experiment was not analyzing the generalization properties,
but analyzing the ability to adjust the weights to capture other invariances.
To  train and test models using datasets with similar statistical properties, the seed for the data generation was set to 444.

\end{document}